\newtheorem{theorem}{Theorem}
\newtheorem{definition}{Definition}
\newcommand{\paratitle}[1]{\noindent {\bf #1}}
\def\vec#1{\mbox{\boldmath $#1$}}
\def\mat#1{\mbox{\bf #1}}
\title{Self-supervised Subgraph Neural Network With Deep Reinforcement Walk Exploration}
\author{Jianming Huang \thanks{Graduate School of Fundamental Science and Engineering, WASEDA University, 3-4-1 Okubo, Shinjuku-ku, Tokyo 169-8555, Japan (e-mail: koukenmei@toki.waseda.jp) } \and Hiroyuki Kasai \thanks{Department of Computer Science and Communication Engineering, WASEDA University, 3-4-1 Okubo, Shinjuku-ku, Tokyo 169-8555, Japan (e-mail: hiroyuki.kasai@waseda.jp)}}
\begin{document}

\maketitle

\begin{abstract}
Graph data, with its structurally variable nature, represents complex real-world phenomena like chemical compounds, protein structures, and social networks. Traditional Graph Neural Networks (GNNs) primarily utilize the message-passing mechanism, but their expressive power is limited and their prediction lacks explainability. To address these limitations, researchers have focused on graph substructures. Subgraph neural networks (SGNNs) and GNN explainers have emerged as potential solutions, but each has its limitations. SGNNs computes graph representations based on the bags of subgraphs to enhance the expressive power. However, they often rely on predefined algorithm-based sampling strategies, which is inefficient. GNN explainers adopt data-driven approaches to generate important subgraphs to provide explanation. Nevertheless, their explanation is difficult to be translated into practical improvements on GNNs. To overcome these issues, we propose a novel self-supervised framework that integrates SGNNs with the generation approach of GNN explainers, named the Reinforcement Walk Exploration SGNN (RWE-SGNN). Our approach features a sampling model trained in an explainer fashion, optimizing subgraphs to enhance model performance. To achieve a data-driven sampling approach, unlike traditional subgraph generation approaches, we propose a novel walk exploration process, which efficiently extracts important substructures, simplifying the embedding process and avoiding isomorphism problems. Moreover, we prove that our proposed walk exploration process has equivalent generation capability to the traditional subgraph generation process. Experimental results on various graph datasets validate the effectiveness of our proposed method, demonstrating significant improvements in performance and precision.
\end{abstract}

\section{Introduction}
{Graph data, as a structurally variable data structure, exhibits more dimensions of variability compared to traditional matrix or array data. Complex real-world phenomena, such as chemical compounds, protein structures, and social networks, can be conveniently represented using graph data. However, this flexibility introduces challenges in feature extraction due to the inherent uncertainty in graph structures. Most mainstream Graph Neural Networks (GNNs) \cite{Kipf_arXiv_2017, xu2018powerful, wijesinghe2021new, wang2019dynamic} rely on the message-passing mechanism. These models are namely message passing neural networks (MPNNs). MPNNs learn graph convolution filters by propagating and aggregating features across nodes. Although the MPNN framework can efficiently capture information from node neighborhoods, its expressive power is inherently limited. Specifically, it has been proven to be equivalent to $1$-Weisfeiler-Lehman (WL) test \cite{xu2018powerful}. To address this limitation, researchers have proposed the high-order GNNs \cite{morris2019weisfeiler} that extend the expressive power to $k$-WL. However, implementing high-order GNNs poses challenges due to their complexity and computational cost, limiting their practical use to no more than $3$rd order.} Numerous studies have demonstrated the significance of graph substructures in various graph-related tasks across different domains \cite{prvzulj2007biological, jiang2010finding, alsentzer2020subgraph, sun2021sugar}. However, in the message passing mechanism, akin to a diffusion-like propagation process, the size of the involved neighborhood grows exponentially as iteration number increases. Consequently, this growth leads to a reduction in the prominence of important substructures, rendering it susceptible to interference from other less relevant components within the local substructure. {Therefore, researchers have increasingly directed their attention toward substructures in graph learning. Some studies focus on leveraging substructure information to enchance traditional GNNs, which are commonly referred to as subgraph neural networks (SGNNs) \cite{bevilacqua2021equivariant, bevilacqua2023efficient}. Some focus on interpreting important substructures from GNNs, known as GNN explainers \cite{ying2019gnnexplainer, shan2021reinforcement, lin2022orphicx}.}

\paratitle{Challenges.} {In this paper, we specially discuss about the challenges brought by SGNNs and GNN explainers, which we believe they have complementary advantages. (i) SGNNs initially generate a bag of subgraphs by selectively removing nodes from the original graph. Subsequently, these subgraphs are encoded using the MPNN framework and aggregated to produce final representations. However, many SGNNs rely on predefined sampling strategies (such as random sampling or algorithm-based approaches) rather than adopting a data-driven approach, leading to inefficiency and instability. (ii) MPNNs are often considered black-box models, lacking the ability to offer interpretable insights into important substructure patterns.} Therefore, many related works to the GNN explainers \cite{ying2019gnnexplainer, shan2021reinforcement, lin2022orphicx} examined the GNN explaination by extracting key subgraph patterns based on trained GNN models. However, despite these efforts, the interpretation of these explanations still requires prior domain knowledge. Furthermore, how to translate this analysis into practical improvements within the GNN framework remains a challenge.

\begin{figure}
\centering
\includegraphics[width= 0.5\hsize]{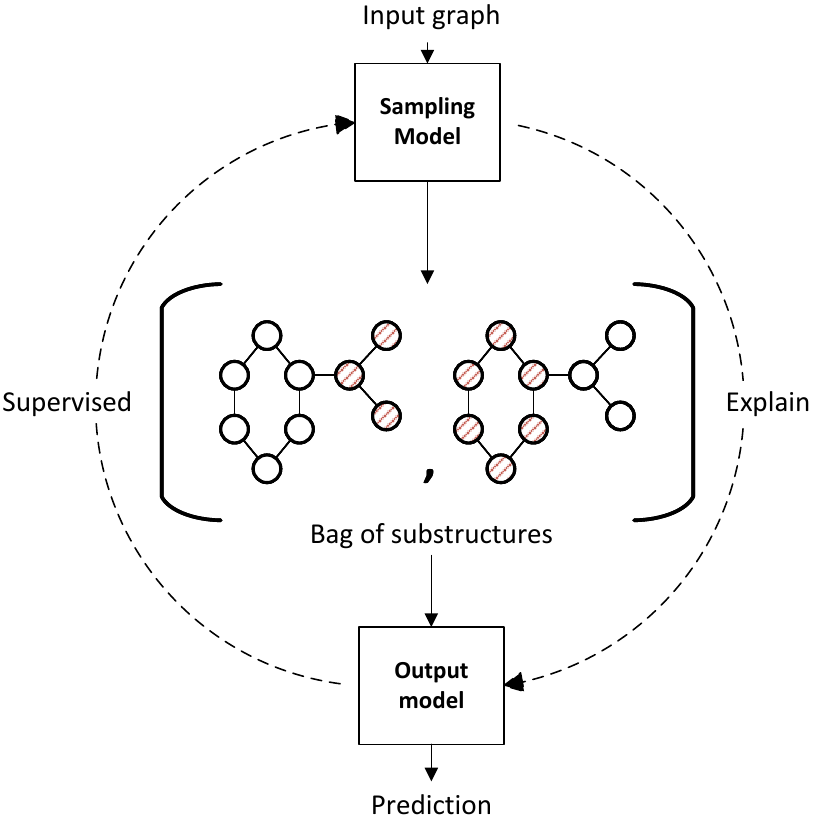}
\caption{Illustration of the self-supervised framework, {where solid arrow represents the direction of data flow and dashed arrow represents the relationship of two models}. The sampling model creates bag of substructures (nodes with red lines denote the substructure) for the output model, which can be considered an explainer of the output model. Conversely, the output model computes predictions, which is used to supervise the training of the sampling model.}
\label{Fig.motiv}
\end{figure}

\paratitle{Motivations.} {In response to the challenges outlined above, we propose to combine the principles of SGNNs with the generation approach employed by GNN explainers to form a novel self-supervised framework. Specifically, as illustrated in Figure \ref{Fig.motiv}, the subgraph sampling model in SGNN is trained using the explainer fashion. In this framework, the sampling model can be considered an explainer of the output model. Its primary function is to optimize the subgraph in a way that enhances the performance of the output model. This optimization process is guided by the principle of maximizing the output model’s performance, where the output model plays the role of a supervisor. By doing so, we can address the aforementioned challenges effectively.} 
Subsequently, the question arises: how should we design the {sampling model}? Fortunately, recent research on explainers for GNNs offers valuable insights. These explainers can effectively extract the graph topology of important substructures. Recent works \cite{shan2021reinforcement, yuan2021explainability, lin2022orphicx} on these explainers are closely related to the deep subgraph generation, which can be divided into two categories: one-shot generation and sequential generation \cite{zhu2022survey}. Within these two categories, we specifically focus on the sequential generation based on reinforcement learning for our proposed framework. Our rationale is as follows: (i) Local relevance: {One-shot generation usually involves computing the correlation between all pairs of nodes \cite{kipf2016variational}. However, this is unnecessary for many graph mining tasks, as most important substructures tend to occupy only a small portion of the entire graph.} Therefore, a localized approach that targets specific subgraph regions is more efficient. (ii) Connectivity assurance: By employing the reinforcement learning strategy, we can easily ensure the connectivity of the generated substructures by adopting node neighbor candidates. This approach helps avoid sparse and scattered results, leading to more coherent and meaningful subgraphs.
{Nevertheless, we still face another challenge: which approach we should choose for the substructure generation process? The old fashion is a subgraph-based generation method, which iteratively select nodes from the subgraph neighborhood. This approach has high computational cost because subgraph neighborhood usually grows exponentially. Conversely, we introduce a walk exploration process. In the walk exploration process, we optimize a path from the entire set of possible random walks generated over the input graph, which can efficiently extract the important substructures. There are two major reasons for this change: (i) Incorporating walk sequences to represent substructures provides a more convenient approach for the embedding process compared to using subgraphs. This convenience arises from the fact that encoding sequences is simpler than encoding graphs, and it eliminates the need to address the isomorphism problem. (ii) Unlike the node-by-node generation process that selects candidates from subgraph boundaries, our walk exploration process considers candidates from neighbors of the current node. This approach reduces the number of candidates and improves efficiency.}

\begin{figure*}
\centering
\includegraphics[width= \hsize]{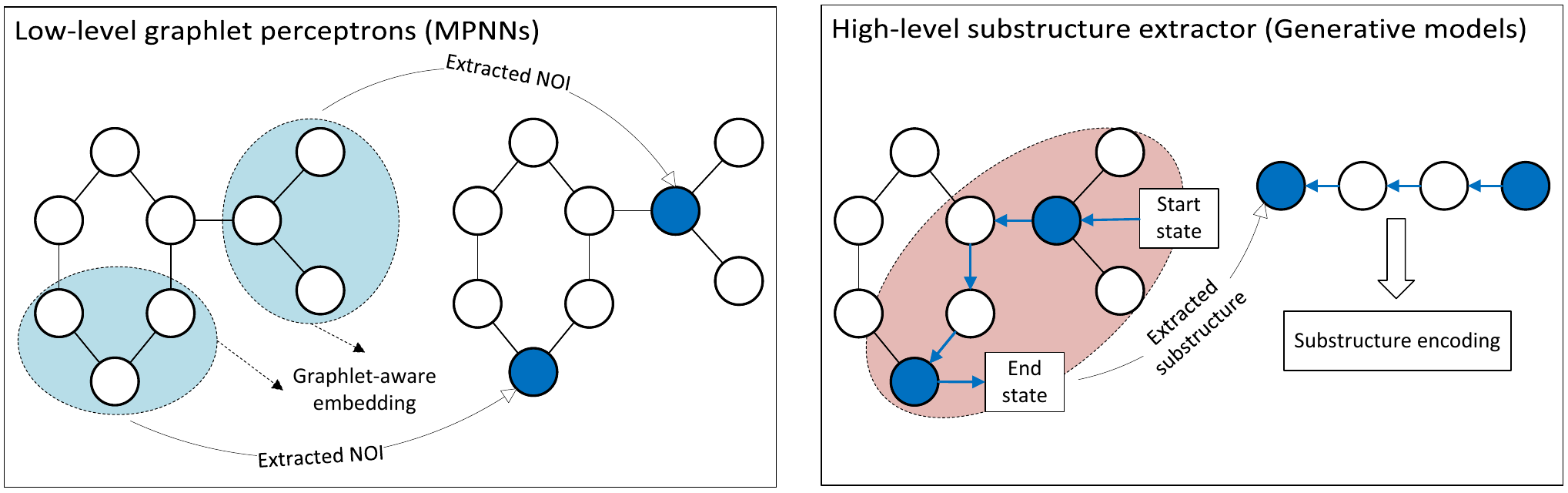}
\caption{Illustration of the sampling model of RWE-SGNN, which shows the process of extracting the important substructure from a graph. The red region highlights the important substructure area. Initially, the low-level graphlet perceptrons detect the key graphlets and assign graphlet-aware embeddings to all the nodes. Subsequently, with these graphlet-aware embeddings, the high-level reinforcement walk exploration framework extracts node of interest and generates a walk sequence specific to the important substructure area. Finally, this walk sequence is fed into a sequence encoder to compute embeddings for the bags of substructures.}
\label{Fig.overall}
\end{figure*}

\paratitle{Present work.} In this work, we propose a self-supervised SGNN based on reinforcement walk exploration (RWE-SGNN). {In Figure \ref{Fig.overall}, we first introduce the sampling model of our proposed method, which consists of two key components: a low-level graphlet perceptron and a high-level substructure extractor. The primary objective of the low-level graphlet perceptron is to identify nodes of interest (NOIs) that exhibit expected graphlet patterns within their local neighborhoods. This functionality is achieved using a shallow MPNN. On the other hand, the high-level substructure extractor leverages a reinforcement generative model to generate important substructures from the original graph based on the identified NOIs. We define a walk-exploration-based Markov decision process (MDP) for this reinforcement generative model.} In the subsequent steps, the extracted bag of walk sequences are fed into a sequence encoder, which computes subgraph embeddings for downstream tasks. {Additionally, in order to achieve the self-supervised function in Figure \ref{Fig.motiv},} we design a two-stage training process. Our two-stage training process {treats the sampling model and the output model as two sub-models and trains them alternately}. The output model is responsible for encoding and decoding the extracted bag of walk sequences to produce the desired outputs for downstream tasks. During training, we optimize the output model using downstream losses while keeping the walk exploration agent’s parameters fixed. Subsequently, we leverage the downstream losses, combined with the trained and fixed output model, as the reward function for reinforcement learning of the walk exploration agent.

Our contributions can be summarized as follows: {(i) we propose a self-supervised framework by combing the idea of SGNNs and GNN explainers. The proposed framework resolves the difficulties of inefficient and unstable sampling policy in traditional SGNNs. Furthermore, it also has good interpretability by providing visble explanation of important substructures from graph data. (ii) we propose a novel walk-exploration-based MDP for our sampling model based on reinforcement learning. It greatly reduces the computational complexity compared to the traditional subgraph-based generation approach. Furthermore, we also prove that our proposed walk-exploration-based MDP has equivalent substructure generation capability to the traditional subgraph-based generation approach.} (iii) we conduct experiments over several widely used graph datasets, which demonstrates the effectiveness of our proposed method.

\section{Preliminaries}
Herein, we use lowercase letters to represent scalars such as $a,b$, and $c$. Lowercase letters with bold typeface such as $\vec{a}, \vec{b},$ {and} $\vec{c}$ represent vectors. Uppercase letters with bold typeface such as $\mat{A}, \mat{B},$ {and} $\mat{C}$ represent the matrices. For slicing matrices, we represent the element of the $i$-th row and $j$-th columns of $\mat{A}$ as $\mat{A}(i,j)$. Furthermore, $\mat{A}(i,:)$ denotes the vector of the $i$-th row of $\mat{A}$. Similarly, $\mat{A}(:,j)$ signifies the vector of $j$-th column. Also, $\vec{a}(i)$ denotes the $i$-th element of the vector $\vec{a}$.
Uppercase letters with an italic typeface denote a set, such as $\mathcal{A},\mathcal{B}, \mathcal{C}$. Also, $|\mathcal{A}|$ denotes the size of $\mathcal{A}$. $\mathbb{R}$ stands for the real number set.

\paratitle{Graph}. A graph can be denoted as $G:=(\mathcal{V},\mathcal{E})$ in discrete mathematics, which is a tuple of a node set $\mathcal{V}$ and an edge set $\mathcal{E}$. More specifically, let $\mathcal{V}:=\{v:v\in\mathbb{N}^+\}$ be a node index set and let $\mathcal{E}:=\{(v, v^\prime):v,v^\prime\in\mathcal{V}\}$ be a set of node pairs. It is noteworthy that we only discuss about the undirected graphs, therefore, $(v,v^\prime)$ and $(v^\prime, v)$ are considered equivalent in $\mathcal{E}$. For a node $v\in\mathcal{V}$, we write $\mathcal{N}(v):=\{v^\prime:v^\prime\in\mathcal{V};(v,v^\prime)\in\mathcal{E};v\neq v^\prime\}$ as the set of node neighbors. {In graph data, each node is assigned with a feature vector. We use the node feature matrix $\mat{X}\in\mathbb{R}^{n\times d}$ to represent these features, where $n$ and $d$ denote the number of nodes and feature dimensions, respectively.} 

\paratitle{SGNNs}. {SGNNs, also known as subgraph GNNs, compute graph representations based on bags of sampled subgraphs. These models typically consist of four components: (i) a subgraph sampling policy extracting subgraphs from the original graph, (ii) permutation-equivariant layers that convert bags of subgraphs into bags of embeddings while considering their natural symmetry (usually implemented using MPNNs), (iii) a pooling function to aggregate the bags of embeddings, and (iv) a multi-layer perceptron (MLP) network for final outputs.}

\paratitle{Post-hoc GNN explainers}. {A post-hoc GNN explainer aims to identify a subgraph denoted as $G^\prime$ from an input graph $G$. This subgraph is considered important subgraph because it can explain the predictions made by a pretrained GNN model $f$ on $G$. Specifically, the effectiveness of the explanation is often evaluated based on the prediction consistency of $f$ between $G^\prime$ and the original graph $G$ \cite{ying2019gnnexplainer, shan2021reinforcement, lin2022orphicx}.}

\section{Proposal}

\begin{figure*}
\centering
\includegraphics[width= \hsize]{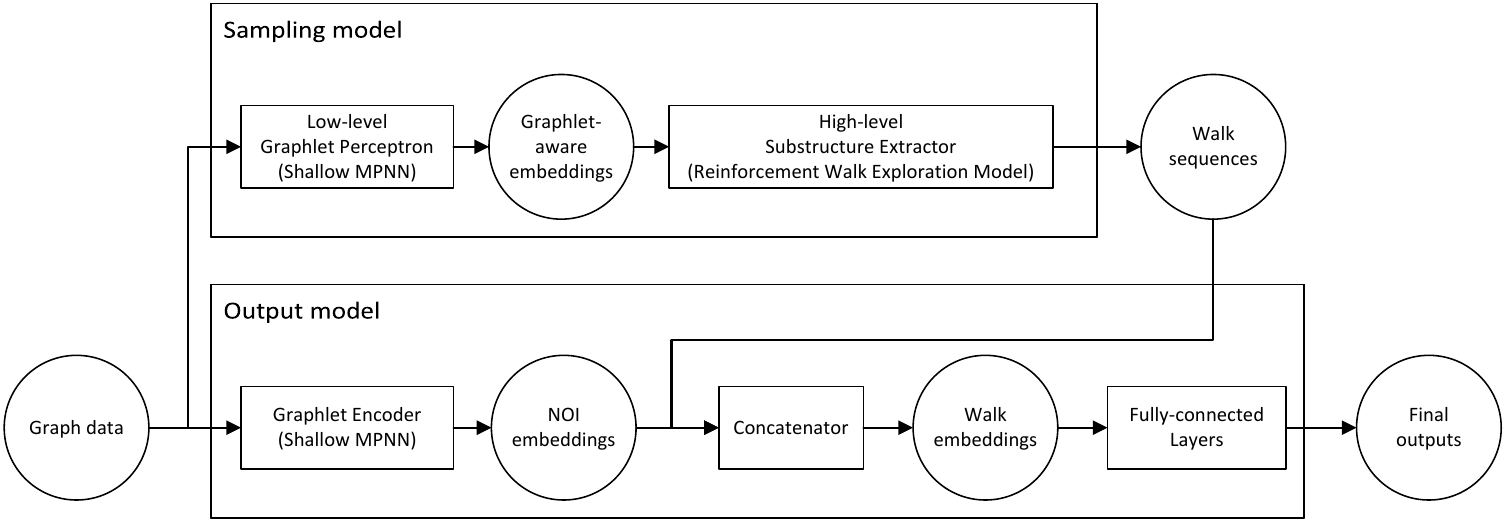}
\caption{Illustration of the model architecture, which comprises two main components: the sampling model and the output model. Initially, the input graph data are processed by the sampling model, which generates important substructures in a data-driven manner. Subsequently, leveraging the generated walk sequence, the output model encodes the extracted substructures for downstream tasks.}
\label{Fig.model_arch}
\end{figure*}

This section introduces the details of our proposed RWE-SGNN in several aspects: (i) Model architecture: we present the architectural details of RWE-SGNN, highlighting its design and components; (ii) Optimization problem for structure generation: we define the optimization problem that involves finding an optimal walk to represent important substructures. (iii) Markov decision process (MDPs): we introduce the concepts of MDP in the context of our reinforcement walk learning process {and compare it with a subgraph-based generation process}. (iv) Two-stage training process: we introduce the two-stage reinforcement learning process, which optimizes a sampling model and an output model. (v) Complexity: we analyze the computational complexity of RWE-SGNN.

\subsection{Model architecture}

In the introduction section, we present our proposal for a self-supervised SGNN capable of dynamically extracting important substructures from graph data. To enhance the generalizability of our framework across various downstream tasks, we decompose it into two sub-models: the sampling model and the output model. As illustrated in Figure \ref{Fig.model_arch}, the sampling model is responsible for {the data-driven substructure sampling (generation),} while the output model is responsible for the substructure encoding ensures that the outputs align with the requirements of specific downstream tasks. Both of these sub-models incorporate a shallow MPNN that computes node-level embeddings. However, their roles differ. In the sampling model, the MPNN calculates low-level graphlet-aware embeddings to guide the substructure generation process within the high-level reinforcement learning model. Conversely, in the output model, the MPNN encodes the extracted NOIs and aggregates them into substructure representations. These subgraph representations are then fed into fully-connected layers (MLP) to produce the final output. Building upon this architechture, our proposed RWE-SGNN solves the following subgraph optimization problem.

\begin{definition}[Subgraph optimization problem]
\label{Def.Sop}
Consider a graph dataset denoted as $\mathcal{G}$, along with a downstream task loss function $l: \mathbb{R}^d \to \mathbb{R}$ that operates on $d$-dimensional embeddings. The RWE-SGNN learns two key components: a sampling model $S_{\theta}:\mathcal{G}\to\widehat{\mathcal{G}}$ parameterized by $\theta$, and an output model $O_{\theta^\prime}:\widehat{\mathcal{G}}\to\mathbb{R}^d$ parameterized by $\theta^\prime$, where $\widehat{\mathcal{G}}:=\{G^\prime(\mathcal{V}^\prime,\mathcal{E}^\prime):\exists G(\mathcal{V},\mathcal{E})\in\mathcal{G},\mathcal{V}^\prime\subseteq\mathcal{V}, \mathcal{E}^\prime\subseteq\mathcal{E}\}$ denotes an induced subgraph set derived from graphs of $\mathcal{G}$. The goal is to optimize the following equation.
\begin{equation}
	\mathop{\rm minimize}_{\theta,\theta^\prime} \sum_{g\in\mathcal{G}}l(O_{\theta^\prime}(S_{\theta}(g))).\label{Eq.Sop}
\end{equation}
\end{definition}
The learning process for solving this problem diverges from the typical deep learning process, because we incorporate a reinforcement learning subgraph generator within the sampling model. This subgraph generator involves several discrete operations, including node indication and subgraph border neighborhood extraction. Specifically, we propose a two-stage optimization process. First, we define a {dynamic reward function based on $l(O_{\theta^\prime}(\hat{g}))$ for the sampling model $S_{\theta}$, where $\hat{g}$ denotes a generated substructure}. Next, we optimize the parameters $\theta$ and $\theta^\prime$ independently and alternately, while keeping the other fixed.

\subsection{Definitions of substructure generation MDP}
\label{Sec.MDPs}
In our approach, akin to many subgraph extraction methods based on reinforcement learning, we formulate the structure generation process as an MDP. The MDP is represented as a 4-tuple $\mathcal{M} := (\mathcal{S}, \mathcal{A}, P, R)$, where $\mathcal{S}$ denotes the state space, $\mathcal{A}$ denotes the action space, $P: \mathcal{S} \times \mathcal{A} \times \mathcal{S} \to [0, 1] \cap \mathbb{R}$ returns the probability of a 3-tuple transition $(s, a, s^\prime)$ from state $s \in \mathcal{S}$ to state $s^\prime \in \mathcal{S}$ due to action $a \in \mathcal{A}$, and $R: \mathcal{S} \times \mathcal{A} \times \mathcal{S} \to \mathbb{R}$ represents the immediate reward function for feasible transitions. Notably, although our structure generation process is an unconditional generation process, the available candidate set for each generation step is finite and typically obtained from a predefined subgraph neighborhood.

While many existing works on GNN explainers offer ready-made frameworks for substructure generation, integrating these frameworks with GNNs for graph mining tasks remains challenging due to their high complexity. To address this, we introduce a reinforcement walk exploration process as an alternative to the conventional subgraph generation approach. Our proposed method significantly reduces the size of the feasible action space at each generation step and eliminates the need for explicit subgraph encoding. To assess the efficiency of our approach compared to the traditional method, we also introduce a variant based on our model architecture that incorporates the conventional subgraph generation process. {We subsequently compare our proposed method with this variant through complexity analysis and numerical experiments to demonstrate the efficiency and effectiveness of our walk exploration approach.} 

\paratitle{Subgraph-generation-based MDP}. In the context of reinforcement subgraph generation, existing works \cite{shan2021reinforcement, yuan2021explainability, trivedi2020graphopt} are valuable references. However, due to the distinction between graph mining tasks and GNN explanation analysis tasks, our proposed method prioritizes high efficiency and low complexity. Consequently, our framework primarily focuses on reducing the size of the available state space and action space, even if it means sacrificing some interpretability within acceptable bounds. It’s important to note that interpretability, while an auxiliary property, is not the primary objective in graph mining tasks. As a result, we contend that sacrificing a degree of accuracy in the extracted subgraph for the sake of explainability will not significantly impact human understanding of key subgraphs. Consequently, we narrow down the state space from the induced subgraph set to the connected subgraph set. Additionally, we limit the action space from the entire set of unselected nodes to the neighborhood of border nodes. This decision is motivated by the fact that subgraphs containing isolated components tend to be difficult to interpret, and aggregating information from these isolated parts is a challenge. We then define the subgraph-generation-based MDP as follows.

\begin{definition}[Subgraph-generation-based MDP]
	\label{Def.subg_mdp}
	Given an input graph $G(\mathcal{V},\mathcal{E})\in\mathcal{G}$, the subgraph-generation-based MDP $\mathcal{M}_g := (\mathcal{S}_g, \mathcal{A}_g, P_g, R_g)$ consists of the following four concepts.
	\begin{itemize}
		\item \textbf{State}. The state space w.r.t. the input graph $G(\mathcal{V},\mathcal{E})$ is defined as its connected subgraph set $\mathcal{S}_g:=\{G^\prime(\mathcal{V}^\prime,\mathcal{E}^\prime):\mathcal{V}^\prime\subseteq\mathcal{V}, \mathcal{E}^\prime\subseteq\mathcal{E}, \forall u,v\in\mathcal{V}^\prime, u, v {\rm\ are\ connected.}\}$. Specifically, a state is defined as a subgraph structure derived from $G(\mathcal{V},\mathcal{E})$ and is denoted as $s:=G_s(\mathcal{V}_s,\mathcal{E}_s)$. 
		\item \textbf{Action}. The complete action space is defined as the node set of input graph $\mathcal{A}_g := \mathcal{V}$, where an action is defined as a neighbor node $a := v_a$. Furthermore, the feasible action space for a state $s \in\mathcal{S}_g$ is defined as the union of its border neighborhood and is denoted as $\widehat{\mathcal{A}}_g(s) := \bigcup_{v\in\mathcal{V}_s}\mathcal{N}(v) \setminus \mathcal{V}_s$.
		\item \textbf{Transition}. The transition between different states is defined as a node adding operation, which involves the addition of a neighbor node to the current state $s$ and also the addition of its connection edges to other nodes already existing in $s$. The transition operator from the current state $s$ is denoted as $T_s^g:\widehat{\mathcal{A}}_g(s)\to \mathcal{S}_g$, where $T_s^g(a) = s^\prime, \mathcal{V}_{s^\prime} = \mathcal{V}_s\cup\{v_a\}, \mathcal{E}_{s^\prime} = \mathcal{E}_s\cup\{(v_a, j):j\in\mathcal{N}(v_a)\cap\mathcal{V}_s\}$.
		\item \textbf{Reward}. The immediate reward function $R_g: \mathcal{S}_g \times \mathcal{A}_g \times \mathcal{S}_g \to \mathbb{R}$ is defined based on the downstream tasks. Given an output model $O_{\theta^\prime}$ and a loss function $l$ presented in Equation (\ref{Eq.Sop}), we define the reward function as $R_g(s,a,s^\prime):=l(O_{\theta^\prime}(s)) - l(O_{\theta^\prime}(s^\prime)), s\in\mathcal{S}_g, s^\prime = T_s^g(a)$.
	\end{itemize}
\end{definition}

\paratitle{Walk-exploration-based MDP}. Despite the reduction from induced subgraph to connected subgraph in our proposed subgraph-generation-based MDP, the action space remains substantial. This is because the subgraph generation is actually a breadth-first-search process that selects candidates from all the surrounding neighbors of the generated subgraph. The size of the feasible action space can be calculated as $|\widehat{\mathcal{A}}_g(s)| = |\bigcup_{v\in\mathcal{V}_s} \mathcal{N}(v)\setminus\mathcal{V}_s|$ according to {\bf Definition \ref{Def.subg_mdp}}. The equation reveals a potential risk of candidate explosion as the subgraph size $|\mathcal{V}_s|$ grows larger. To mitigate this risk and further reduce the action space, we propose a depth-first-search strategy based on walk exploration. In walk exploration, we utilize a mechanism akin to the random walk process. Specifically, the next step of a walk is chosen from the neighborhood of the current node. Reinforcement learning plays an important role in guiding the direction of the walk. We define the Walk-exploration-based MDP as follows.

\begin{definition}[Walk-exploration-based MDP]
	\label{Def.walk_mdp}
	Given an input graph $G(\mathcal{V},\mathcal{E})\in\mathcal{G}$ and the maximum length of walks $L\in\mathbb{N}^+$, the Walk-exploration-based MDP $\mathcal{M}_w := (\mathcal{S}_w, \mathcal{A}_w, P_w, R_w)$ consists of the following four concepts.
	\begin{itemize}
		\item \textbf{State}. The state space w.r.t. the input graph $G(\mathcal{V},\mathcal{E})$ is defined as the random walk set on $G$ of length less than $L+1$ and is denoted as $\mathcal{S}_w:=\{(v_1,v_2,\cdots,v_l):\forall i\in\llbracket l-1\rrbracket, l\leq L, v_i,v_l\in\mathcal{V}, (v_i, v_{i+1})\in\mathcal{E}\}$. Specifically, a state is defined as a walk sequence and is denoted as $s:=(v_1^s,v_2^s,\cdots,v_l^s)$. 
		\item \textbf{Action}. {We use the similar definition to the $\mathcal{A}_g$ in {\bf Defintion \ref{Def.subg_mdp}} to define the walk action space $\mathcal{A}_w:=\mathcal{V}$.} Furthermore, the feasible action space for a state $s \in\mathcal{S}_w$ is defined as the node neighborhood of the current step and is denoted as $\widehat{\mathcal{A}}_w(s) := \mathcal{N}(v^s_l)$.
		\item \textbf{Transition}. The transition between different states is defined as a node appending operation, which involves the appending operation of a neighbor node to the walk sequence of $s$. The transition operator from the current state $s$ is denoted as $T^w_s:\widehat{\mathcal{A}}_w(s)\to \mathcal{S}_w$, where $T^w_s(a) = s\oplus v_a$ and $\oplus$ denotes the appending operator to a walk sequence.
		\item \textbf{Reward}. {Similar to the reward definition $R_g$ in {\bf Defintion \ref{Def.subg_mdp}}, we define the walk-exploration-based} reward function as $R_w(s,a,s^\prime):=l(O_{\theta^\prime}(s)) - l(O_{\theta^\prime}(s^\prime)), s\in\mathcal{S}_w, s^\prime = T^w_s(a)$.
	\end{itemize}
\end{definition}

{Importantly, we do not employ an additional pretraining step for starting point selection. In both subgraph-generation-based and walk-exploration-based MDPs, the process begins with an empty substructure as the initial state, where all nodes in the graph are considered candidates. Moreover, the generation process terminates either when the substructure reaches its maximum size or when no available candidates remain. To assess the generation capability of our proposed walk exploration approach, we propose the following theorem.}

\begin{theorem}
\label{Theom.1}
Given a graph $G(\mathcal{V},\mathcal{E})$. For any connected subgraph $G^\prime(\mathcal{V}^\prime,\mathcal{E}^\prime)$ of $G$, let $\mathcal{W}_{G^\prime}$ denote the complete random walk set on $G^\prime$, which includes all possible random walk sequences of arbitrary length. There always exists at least one random walk sequence in $\mathcal{W}_{G^\prime}$ that can visit all nodes in $\mathcal{V}^\prime$.
\end{theorem}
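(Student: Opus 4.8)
The statement is essentially the classical fact that a connected graph admits a walk visiting all of its vertices, restricted to the subgraph $G'$. The plan is to prove it constructively by passing to a spanning tree of $G'$ and exhibiting an explicit walk, and to note that everything happens inside $G'$ so the resulting sequence genuinely lies in $\mathcal{W}_{G'}$.

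\textbf{Main construction.} First I would fix the connected subgraph $G'(\mathcal{V}',\mathcal{E}')$ and, using connectivity, choose a spanning tree $T=(\mathcal{V}',\mathcal{E}_T)$ with $\mathcal{E}_T\subseteq\mathcal{E}'$. Since every edge of $T$ is an edge of $G'$, any walk that only uses edges of $T$ is automatically a valid random walk in $\mathcal{W}_{G'}$. Next I would double every edge of $T$ to obtain a multigraph $2T$; it is still connected and every vertex has even degree, so by Euler's theorem $2T$ has an Eulerian circuit. Reading off the vertex sequence of this circuit gives a closed walk $w$ in $G'$ that traverses each tree edge exactly twice and therefore visits every vertex of $\mathcal{V}'$ (for the degenerate case $|\mathcal{V}'|=1$ the trivial length-one walk suffices). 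Because $w$ has finite length and uses only edges of $T\subseteq G'$, we have $w\in\mathcal{W}_{G'}$, which proves the theorem.

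\textbf{Alternative (induction).} A more self-contained variant is induction on $|\mathcal{V}'|$, proving the slightly stronger claim that for any fixed $v\in\mathcal{V}'$ there is a closed walk based at $v$ visiting all of $\mathcal{V}'$. The base case $|\mathcal{V}'|=1$ is immediate. For the step, take a spanning tree of $G'$ and pick a leaf $u\neq v$ (a tree on at least two vertices has at least two leaves, so this is possible) with tree-neighbor $v'$; the subgraph $G'\setminus\{u\}$ is still connected and contains $v$, so by the induction hypothesis it has a closed walk based at $v$ visiting $\mathcal{V}'\setminus\{u\}$, which in particular passes through $v'$. Inserting the detour $v'\to u\to v'$ at an occurrence of $v'$ yields the required walk.

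\textbf{Expected difficulty.} I do not anticipate a genuine obstacle. The only points needing a word of care are that a ``random walk sequence'' is a walk, hence allowed to repeat vertices and edges (this is exactly what legitimizes the doubled-tree detours), and that $\mathcal{W}_{G'}$ imposes no length bound, so the possibly long Eulerian walk is admissible; the requirement that the walk stay inside $G'$ is handled automatically by working with a spanning tree of $G'$.
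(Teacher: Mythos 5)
Your proof is correct, but it takes a genuinely different route from the paper's. The paper's argument is more direct and makes no use of spanning trees or Euler's theorem: it simply lists the nodes of $\mathcal{V}'$ in an arbitrary order $(v_1,\ldots,v_n)$, invokes connectivity of $G'$ to obtain a walk $w_{t,t+1}$ from $v_t$ to $v_{t+1}$ for each consecutive pair, and concatenates $w_{1,2},w_{2,3},\ldots,w_{n-1,n}$ into a single walk that visits every node. Your doubled-spanning-tree construction buys more than the theorem asks for: it yields a \emph{closed} walk that traverses at most $2(|\mathcal{V}'|-1)$ edges, a quantitative bound that is relevant in this paper's context (the MDP caps walks at length $L$, and the authors informally discuss backtracking cost for star-shaped subgraphs) but is not claimed or needed by the statement. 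The paper's concatenation argument, by contrast, gives no useful length bound (each $w_{t,t+1}$ is only guaranteed to exist, not to be short, though one could of course take shortest paths) yet is shorter and requires no auxiliary machinery. Your induction variant is also sound; the only care it needs, which you correctly handle, is choosing the removed vertex to be a leaf of a spanning tree so that connectivity is preserved. Both your argument and the paper's rely on the same two observations you flag at the end: walks may repeat vertices and edges, and $\mathcal{W}_{G'}$ imposes no length bound.
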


\begin{proof}
{Let $a := (v_1, v_2, \ldots, v_n)$ be a sequence containing all nodes in $\mathcal{V}^\prime$, arranged in an arbitrary order, where $n = |\mathcal{V}^\prime|$. For any consecutive pairs $(v_t, v_{t+1})$ in $a$, there always exists a walk sequence $w_{t,t+1} \in \mathcal{W}_{G^\prime}$ that starts from $v_t$ and ends at $v_{t+1}$ because $G^\prime$ is a connected subgraph. Consequently, we can construct a walk containing all nodes in $\mathcal{V}^\prime$ by concatenating $w_{1,2}, w_{2,3}, \ldots, w_{n-1,n}$ into a single walk sequence.}
\end{proof}

{\bf Theorem \ref{Theom.1}} establishes that the subgraph generation capability achieved through walk exploration is equivalent to our proposed subgraph generation strategy. By replacing the subgraph generation process with walk exploration, we significantly reduce the size of the feasible action space to $|\mathcal{N}(v^s_l)|$ while keeping the same capability to subgraph generation. Although the depth-first-search method may lead to an increased number of necessary generation steps due to backtracing exploration, this trade-off remains acceptable compared to the candidate explosion. We found that the primary scenario for backtracing occurs in star-shaped subgraphs, where walks backtrack from the end of one branch to explore other branches. Fortunately, this challenge is mitigated by our adoption of a shallow MPNN as the graphlet encoder as shown in Figure \ref{Fig.model_arch}, which avoids exhaustive searches of small branches.

\subsection{Sampling model with reinforcement learning}
\label{Sec.SGM}

Overall the sampling model is divided into two parts: a low-level graphlet perceptron and a high-level substructure generator. We introduce the implementation details about the each component separately in this subsection. For the detailed pseudo-code algorithms of the training and inference process, please refer to \textbf{Algorithm \ref{Alg.train}} and \textbf{Algorithm \ref{Alg.test}} in the appendix.

\paratitle{Low-level graphlet perceptron}. In accordance with the introduction section, we implemente a low-level graphlet perceptron using a shallow MPNN to detect simple graphlets. Let ${\rm MPNN}_{\Theta}^{(p,d,k)}$ denote a $p$-layer MPNN parameterized by $\Theta$, with an input channel size of $d$ and an output channel size of $k$. Given a graph input $G(\mathcal{V},\mathcal{E})$ with a node feature matrix $\mat{X}\in\mathbb{R}^{n\times d}$ (where $|\mathcal{V}| = n$), and each row of $\mat{X}$ representing a $d$-dimensional node feature vector, we compute the graphlet-aware embeddings using the following equation.
\begin{equation}
	\mat{Z} \leftarrow {\rm MPNN}_{\Theta}^{(p,d,k)}(G, \mat{X}),\label{Eq.Zs}
\end{equation}
where $\mat{Z}\in\mathbb{R}^{n\times k}$ denotes the matrix of $k$-dimensional graphlet-aware node embeddings produced by MPNN. These embeddings enhance the reinforcement agent’s awareness of graphlet patterns within node neighborhoods in the following high-level substructure generator.

\paratitle{High-level substructure generator}. The high-level substructure generator is implemented using a reinforcement learning framework based on our proposed MDPs described in Section \ref{Sec.MDPs}. To train an agent model for substructure generation, we employ deep $Q$-networks (DQN) \cite{mnih2013playing}, which is based on the well-known off-policy reinforcement learning algorithm: $Q$-learning \cite{watkins1992q}. We begin by defining the $Q$-function as $Q:\mathcal{S}\times\mathcal{A}\to\mathbb{R}$ according to the Bellman equation, which provides a recursive definition as follows.
\begin{equation}
	Q^\star(s_t, a) := R(s_t, a, s_{t+1}) + \gamma\cdot\mathop{\rm maximize}_{a^\prime}Q^\star(s_{t+1}, a^\prime),\label{Eq.QL}
\end{equation}
where $s_t\in\mathcal{S}$ denotes the current state, and $s_{t+1} := T_{s_t}(a) \in \mathcal{S}$ denotes the next state transited from $s_t$ with action $a$, and $\gamma\in[0,1]\cap\mathbb{R}$ denotes the weight of future rewards. We then define a $\epsilon$-greedy policy function $\pi_\epsilon:\mathcal{S}\times\mathbb{R}\to\mathcal{A}$ using the following equation to sample the generation trajectory sequences, where $\epsilon\in[0,1]\cap\mathbb{R}$ and $q\in\mathbb{R}$ denotes a random variable, and ${\rm Random}(\widehat{\mathcal{A}}(s))$ denotes the operation to randomly select an action from $\widehat{\mathcal{A}}(s)$. It is noteworthy that $q$ is generated from a uniform probability distribution during the training process, while it is always set to be infinite during the inference process.
\begin{equation*}
	\pi_\epsilon(s, q):=\left\{
	\begin{aligned}
	{\rm Random}(\widehat{\mathcal{A}}(s)),&\quad q < \epsilon\\
	\mathop{\rm arg~max}_{a\in\widehat{\mathcal{A}}(s)}Q^\star(s,a),&\quad q\geq\epsilon
	\end{aligned}
	\right.
\end{equation*}
The architecture of the DQN in our sub-model consists of an MLP designed to predict $Q$-values based on the state environments. Given that we propose two categories of MDPs: $\mathcal{M}_g$ and $\mathcal{M}_w$, we define separate state environment encoders for each MDP. First, for the subgraph-generation-based MDP $\mathcal{M}_g$, we introduce the encoder $E_g:\mathcal{S}_g\times\mathbb{R}^{n\times k}\to\mathbb{R}^k$ using Equation (\ref{Eq.subg_enc}), which can be viewed as a global mean pooling on the extracted subgraph with node embeddings. Second, for the walk-exploration-based MDP $\mathcal{M}_w$, we define the state encoder $E_w:\mathcal{S}_w\times\mathbb{R}^{n\times k}\to\mathbb{R}^{Lk}$ based on Equation (\ref{Eq.walk_enc}), which outputs a zero-padded walk sequence vector.
\begin{eqnarray}
	E_g(s, \mat{Z})&:=&\frac{1}{|\mathcal{V}_s|}\sum_{v\in\mathcal{V}_s}\mat{Z}(v,:),\label{Eq.subg_enc}\\
	E_w(s, \mat{Z})&:=&{\rm ZeroPad}_{Ld}\left(\bigoplus_{i=1}^l\mat{Z}(v^s_i,:)\right),\label{Eq.walk_enc}
\end{eqnarray}
where $\mat{Z}$ represents the graphlet-aware embeddings in Equation (\ref{Eq.Zs}), $\bigoplus$ denotes the vector concatenation operator, and ${\rm ZeroPad}_{Ld}$ corresponds to the zero-padding operator that pads the input vector with zeros until its size reaches $L \cdot d$. For the training process of DQN, we adopt two different agent models: policy model and target model, each of them is a parameterized $Q$-function. We denote them as $Q^p_{\Phi}$ and $Q^t_{\Phi^\prime}$, respectively, where $\Phi$ and $\Phi^\prime$ denote their corresponding parameters {(such as MLP parameters)}. Let $\mathcal{T}$ denote the set of $T$-length generation trajectory sequences sampled by $\pi_\epsilon$ at one epoch. Let $\tau:=((s_1,a_1),(s_2, a_2),\cdots,(s_T, a_T))\in\mathcal{T}$ denote one generation trajectory sequence of $\mathcal{T}$, which contains tuples of states and actions of different steps. We optimize the parameters of these two agent models using the following equations.
\begin{eqnarray*}
	Q_E(s,a) &:=& R(s, a, s^\prime) + \gamma\cdot\mathop{\rm maximize}_{a^\prime}Q^t_{\Phi^\prime}(E(s^\prime, \mat{Z}), a^\prime),\\
	\Theta, \Phi &\leftarrow& \mathop{\rm arg~min}_{\Theta, \Phi}\sum_{\tau\in\mathcal{T}}\sum_{(s,a)\in\tau}\left|Q^p_{\Phi}(E(s, \mat{Z}), a) - Q_E(s, a) \right|,\\
	\Phi^\prime &\leftarrow& \beta\cdot\Phi + (1-\beta)\cdot\Phi^\prime,
\end{eqnarray*}
where $\beta\in[0,1]\cap\mathbb{R}$ denote the smoothing factor and $Q_E$ denotes the expected $Q$-value derived from Equation (\ref{Eq.QL}). $s^\prime := T_s(a)$ denotes the state that results from executing action $a$. The MDP components $E$ and $R$ and $T_s$ in the equation can be replaced with the ones corresponding to the subgraph-generation-based MDP or the walk-exploration-based MDP. {It should be noted that, to generate a bag of substructures containing multiple samples, we choose the top-$K$ candidates based on their $Q$-values from the initial state to generate $K$ distinct samples for the same graph.}

\subsection{Output model for downstream task}

In Section \ref{Sec.SGM}, we introduce the sampling model that generates bags of substructures. Subsequently, we introduce an output model designed to transform these generated substructures into the expected outputs, driven by the losses relevant to the downstream tasks. This output model is built upon an encoder-decoder architecture. The encoder of the output model use the same encoder functions as the ones of sampling model in Equations (\ref{Eq.subg_enc}) and (\ref{Eq.walk_enc}). The only difference is that the matrix $\mathbf{Z}$ within these equations is outputed by another shallow MPNN denoted as ${\rm MPNN}_{\Theta^\prime}^{(p,d,k)}$, which is parameterized by $\Theta^\prime$. The rationale behind segregating the MPNNs of the structure generation and output models is to enhance the stability of the learning process. This is attributed to the two-stage updating mechanism for the parameters of these two constituent sub-models. For the decoding phase, we employ an MLP, parameterized by $\Theta^{\prime\prime}$, which operates on the embedded representations of the substructures. The optimization of our output model is based on the subsequent equations when given a graph input $G(\mathcal{V},\mathcal{E})$ with a node feature matrix $\mat{X}\in\mathbb{R}^{n\times d}$.
\begin{eqnarray}
	\mat{Z}^\prime &\leftarrow& {\rm MPNN}_{\Theta^\prime}^{(p,d,k)}(G, \mat{X}),\\
	\Theta^\prime, \Theta^{\prime\prime} &\leftarrow& \mathop{\rm arg~min}_{\Theta^\prime, \Theta^{\prime\prime}} l({\rm MLP}_{\Theta^{\prime\prime}}(f_{\rm PL}(E(S_\theta(G), \mat{Z}^\prime)))),\label{Eq.output}
\end{eqnarray}
where $S_\theta$ denotes the sampling model and $l$ denotes the downstream loss function as described in {\bf Definition \ref{Def.Sop}}. ${\rm MLP}_{\Theta^{\prime\prime}}$ denotes the MLP decoder. $f_{\rm PL}$ denotes a pooling strategy when needed, such as the global mean pooling and the global max pooling.

\subsection{Complexity analysis}

{In this section, we analyze the computational complexity of our proposed walk-exploration-based method compared to the subgraph-generation-based method. Because these two approaches share similar MDP definitions and are implemented within the same framework, we focus on the complexity of their action space. Let $N \in \mathbb{N}^+$ denote the maximum number of nodes in the subgraph-generation-based method, and let $L \in \mathbb{N}^+$ denote the maximum length of walks in the walk-exploration-based method. 
For the subgraph-generation-based method, the number of processed candidates required to generate a subgraph can be calculated as $\sum_{t = 1}^N |\widehat{\mathcal{A}}_g(s_t)| = \sum_{t = 1}^N \sum_{v \in \mathcal{V}_{s_t}} |\mathcal{N}(v) \setminus \mathcal{V}_{s_t}|$. Although the value of $|\mathcal{N}(v) \setminus \mathcal{V}_{s_t}|$ varies for different nodes $v$ and at different steps $t$, we treat them as having the same order of magnitude and denote the average value as $D$. Consequently, the complexity of processed candidates in the subgraph-generation-based method can be expressed as $O\left(\frac{N^2 + N}{2}D\right)$.
For the walk-exploration-based method, we calculate the number of processed candidates as $\sum_{t = 1}^N |\widehat{\mathcal{A}}_w(s_t)| = \sum_{t = 1}^L |\mathcal{N}(v_t)|$. Assuming $|\mathcal{N}(v_t)|$ is of the same order of magnitude as $D$, the complexity of the walk-exploration-based method becomes $O(LD)$. Consequently, our proposed walk-exploration-based method reduces the complexity of the subgraph-generation-based method from quadratic to linear complexity.}

\section{Numerical Evaluation}
In this section, we present numerical evaluations of our proposed method. Our objectives are threefold: (A) to assess the performance of our method in downstream tasks, (B) to investigate its explainability, and (C) to analyze the necessity of our improvement of the walk-based MDP. We report the experimental results and provide detailed analysis.

\subsection{Experimental setup}
\paratitle{Datasets and criterion}. We conduct our experiments in seven graph classification tasks on real-world graph datasets: MUTAG \cite{debnath1991structure}, PTC-MR \cite{helma2001predictive}, BZR, COX2\cite{sutherland2003spline}, PROTEINS \cite{borgwardt2005protein}, IMDB-BINARY \cite{yanardag2015deep} and NCI109 \cite{wale2008comparison}. We download the source data from the TUdataset \cite{Morris+2020} implemented with Pytorch geometric \cite{fey2019fast}. All the datasets are for the binary graph classification tasks, where each node in the graphs are assigned with a discrete label. We perform inference using a 10-fold cross-validation on each dataset. Initially, the entire dataset is split into 10 folds, with 9 of them serving as the training subset and 1 as the validation subset. Similar to many related works \cite{wijesinghe2021new, sun2021sugar}, we use the average and standard deviation of the highest validation accuracies as our evaluation metric. In the training process for graph classification, we use the cross-entropic loss to define the loss function $l$ in Equation (\ref{Eq.Sop}).

\paratitle{Comparison methods}. We compare our proposed methods with other methods of two categories: (i) graph kernels and (ii) graph neural networks. On the one hand, category (i) contains WL \cite{Shervashidze_JMLR_2011_s}, WWL \cite{togninalli2019wasserstein}, and FGW \cite{titouan2019optimal}, which are effective and classic Weisfeiler-Lehman-based graph kernels. On the other hand, category (ii) contains GIN \cite{xu2018powerful}, DGCNN \cite{maron2019provably}, GraphSAGE\cite{hamilton2017inductive}, SUGAR \cite{sun2021sugar}, DSS-GNN \cite{bevilacqua2021equivariant} and GraphSNN \cite{wijesinghe2021new}. It is noteworthy that SUGAR is also based on the subgraph extraction with reinforcement learning, where they additionally adopt a self-supervised mutual information (MI) loss. Therefore, we only report their performance without MI loss to keep the training process the same as others. 

\paratitle{Parameter settings}. We directly take the results of comparison methods from their papers. Therefore, we only adjust the hyper-parameters of our proposed methods with the following parameter settings: (i) shallow MPNN: we adopt a $3$-layer GIN \cite{xu2018powerful} as the shallow MPNN for our low-level graphlet perceptron and graphlet encoder. (ii) MDP settings: for the setting of the maximum trajectory length $L$, we adjust it within $\{8, 16\}$ for both the subgraph-based MDP and walk-based MDP. We also adjust the number of substructure samples extracted in a single graph within $\{3, 16, 32\}$. (iii) $Q$-learning settings: we set the $Q$-learning parameters as $\gamma = 0.9$, $\beta = 0.1$. The DQN is trained with $\epsilon$ gradually increasing from $0.1$ to $0.4$ as the epochs progress.

\begin{table*}
\caption{Average and standard derivation results of graph datasets. ``-'' represents that the result is not reported in the original paper.}
\label{Tab.acc}
\begin{center}
\scalebox{0.8}{
\begin{tabular}{lccccccc}
\toprule
Methods&MUTAG&PTC-MR&BZR&COX2&PROTEINS&IMDB-B&NCI109\\
\midrule
WL&90.40$\pm$5.7&59.90$\pm$4.30&78.5$\pm$0.60&81.7$\pm$0.70&75.00$\pm$3.10&73.80$\pm$3.90&82.46\cr
WWL&87.20$\pm$1.50&66.30$\pm$1.20&84.40$\pm$2.00&78.20$\pm$0.40&74.20$\pm$0.50&74.30$\pm$0.80&-\cr
FGW&88.40$\pm$5.60&65.30$\pm$1.20&85.10$\pm$4.10&77.20$\pm$4.80&74.50$\pm$2.70&63.80$\pm$3.40&-\cr
\midrule
GIN&89.40$\pm$5.60&64.60$\pm$7.00&-&-&75.90$\pm$2.80&75.10$\pm$5.10&-\cr
DGCNN&85.80$\pm$1.70&58.60$\pm$2.50&-&-&75.50$\pm$0.90&70.00$\pm$0.90&-\cr
GraphSAGE&85.10$\pm$7.60&63.90$\pm$7.70&-&-&75.90$\pm$3.20&72.30$\pm$5.30&-\cr
SUGAR (NoMI)&91.11&70.38&-&-&74.07&-&80.57\cr
DSS-GNN&91.10$\pm$7.00&69.20$\pm$6.50&-&-&75.90$\pm$4.30&77.10$\pm$3.00&{\bf 82.80$\pm$1.20}\cr
GraphSNN&{\bf 94.40$\pm$1.20}&71.01$\pm$3.60&{\bf 91.88$\pm$3.20}&{\bf 86.72$\pm$2.90}&{\bf 78.21$\pm$2.90}&{\bf 77.87$\pm$3.10}&-\cr
\midrule
SubgraphMDP&92.57$\pm$5.4&{\bf 75.60$\pm$4.26}&90.37$\pm$3.57&84.59$\pm$2.45&73.76$\pm$4.29&75.20$\pm$2.32&71.99$\pm$1.97\cr
WalkMDP&{\bf 97.31$\pm$4.35}&{\bf 75.55$\pm$6.87}&{\bf 93.34$\pm$3.13}&{\bf 88.65$\pm$2.69}&{\bf 76.81$\pm$3.79}&{\bf 77.30$\pm$1.95}&{\bf 83.93$\pm$1.31}\cr
\bottomrule
\end{tabular}}
\end{center}
\end{table*}

\begin{figure*}
	\centering
	\includegraphics[width= \hsize]{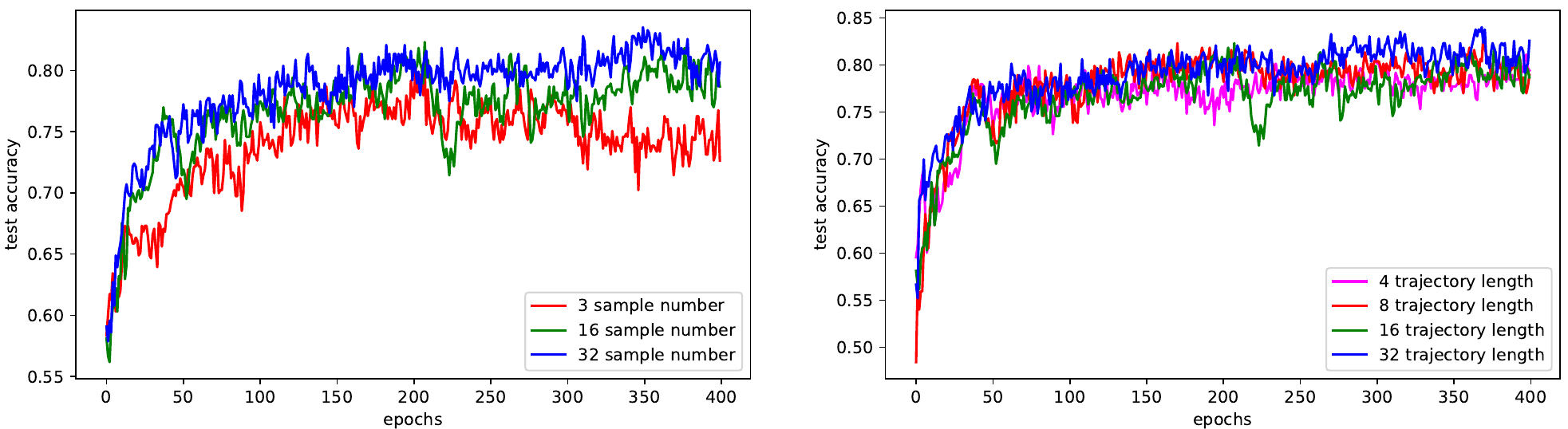}

	\caption{Test accuracy of walk-based MDP for varying trajectory lengths and varying sample numbers on the NCI109 dataset. The x-axis represents the number of epochs, while the y-axis corresponds to the classification accuracy (with a maximum value of $1$ denoting $100\%$).}
	\label{Fig.abla1}
\end{figure*}

\subsection{Experimental results and analysis}
\paratitle{Graph classification performance}. The results of graph classification tasks are reported in Table \ref{Tab.acc}, where the top-$2$ results are marked in a bold typeface. Compared to other methods, our proposed methods consistently achieves top-2 performance across all datasets. Notably, it attains the top position on the MUTAG, PTC-MR, BZR, and COX2 datasets. These experiments produce expected results responding to our experimental objective (A), where our proposed methods show their effectiveness on graph classification tasks. As for our experimental objective (C), by comparing the performance across the rows of Subgraph-based MDP (SubgraphMDP) and Walk-based MDP (WalkMDP), we observe that our proposed reinforcement substructure generation framework significantly enhances the performance compared to the subgraph-based MDP.

\paratitle{Subgraph visualization}. To achieve our experimental objective (B), we conduct the subgraph visualization experiments on a real-world dataset: MUTAG, and a synthetic dataset: BA-2motifs \cite{luo2020parameterized}. According to the prior knowlege of biochemical engineering \cite{lin2022orphicx}, the chemical group $\rm NO_2$ plays a very important role in classifying mutagenic and non-mutagenic molecules. However, there are exceptions where $\rm NO_2$ alone cannot directly determine mutagenicity. As for the BA-2motifs dataset, it contains $1000$ Barabasi-Albert (BA) graphs with two categories of substructure patterns/motifs: circle and house. We conduct the graph classification tasks on these two datasets and visualize the extracted important substructures using our proposed method with walk-based MDP. {In these experiments, we only sample one walk sequence with maximum length as $16$}. Figure \ref{Fig.vis1} and Figure \ref{Fig.vis2} show the visualization results on MUTAG and BA-2motifs, respectively. In MUTAG dataset, we present two samples of mutagenic molecules (above the epoch arrow), and two samples of non-mutagenic molecules (below the epoch arrow). Figure \ref{Fig.vis1} shows the evolving process of extracted subgraphs as epochs progress. Our proposed method not only correctly extracts the $\rm NO_2$ chemical group but also effectively avoids the exception cases where $\rm NO_2$ is present but the graph label differs. In BA-2motif dataset, we also present two samples of circle motif and two samples of house motif. Similarly, the extracted substructures in Figure \ref{Fig.vis2} also shows that our proposed method is able to provide convincing explainability.


\begin{figure}
	\centering
	\includegraphics[width= 0.8\hsize]{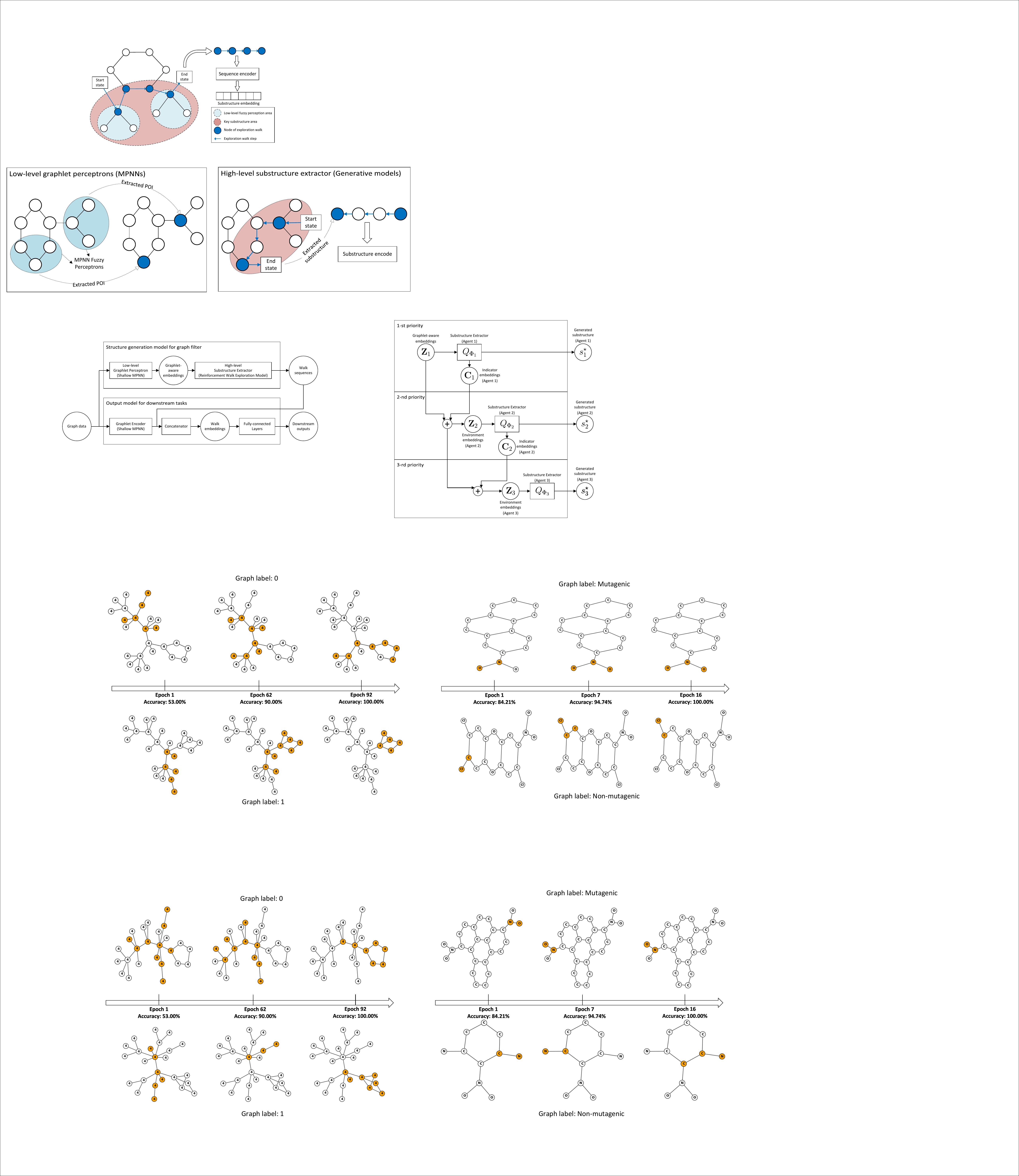}
	\includegraphics[width= 0.8\hsize]{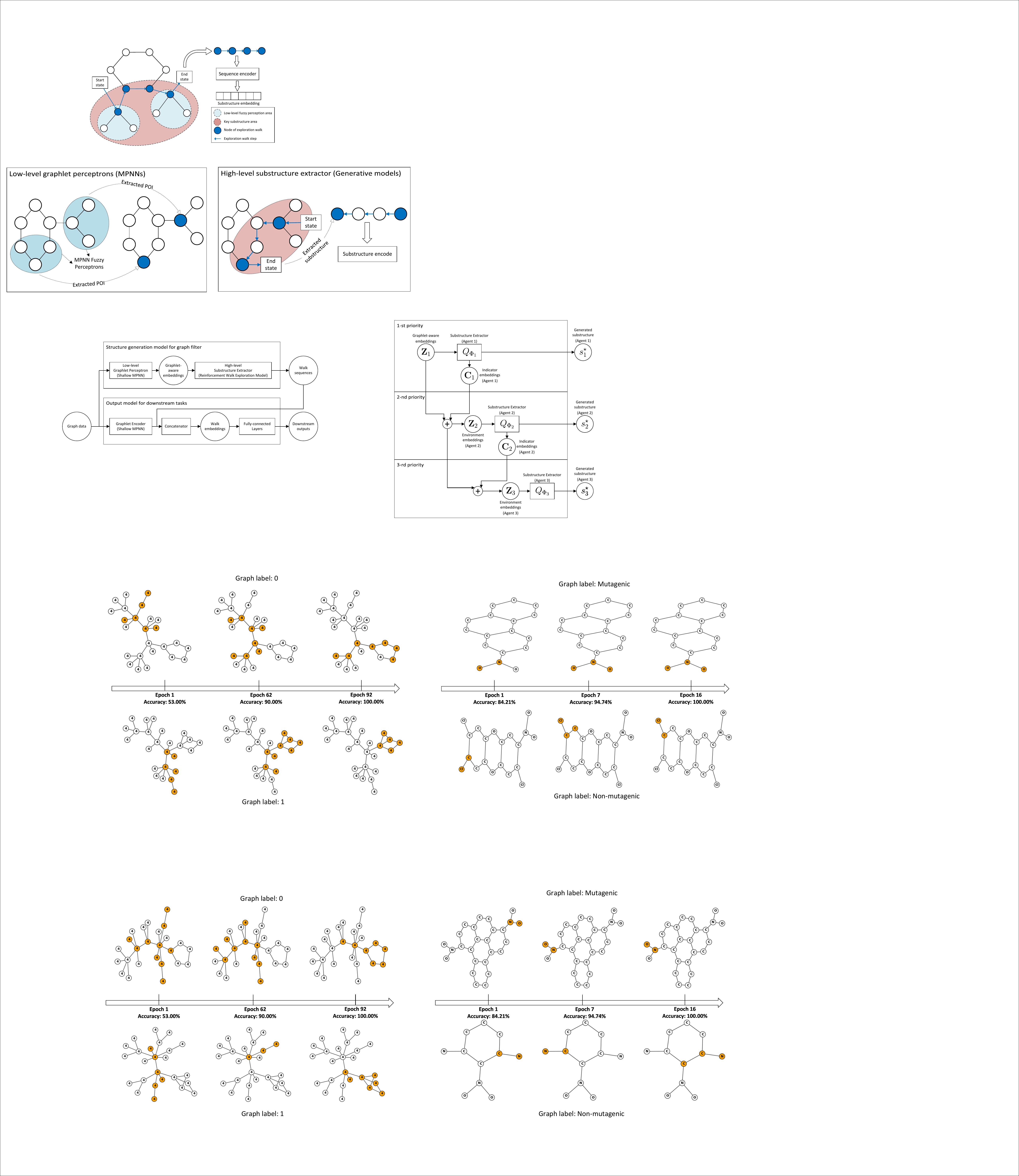}

	\caption{The extracted subgraphs at different epochs in MUTAG dataset. The nodes in orange color denote the extracted nodes.}
	\label{Fig.vis1}
\end{figure}

\begin{figure}
	\centering
	\includegraphics[width= 0.8\hsize]{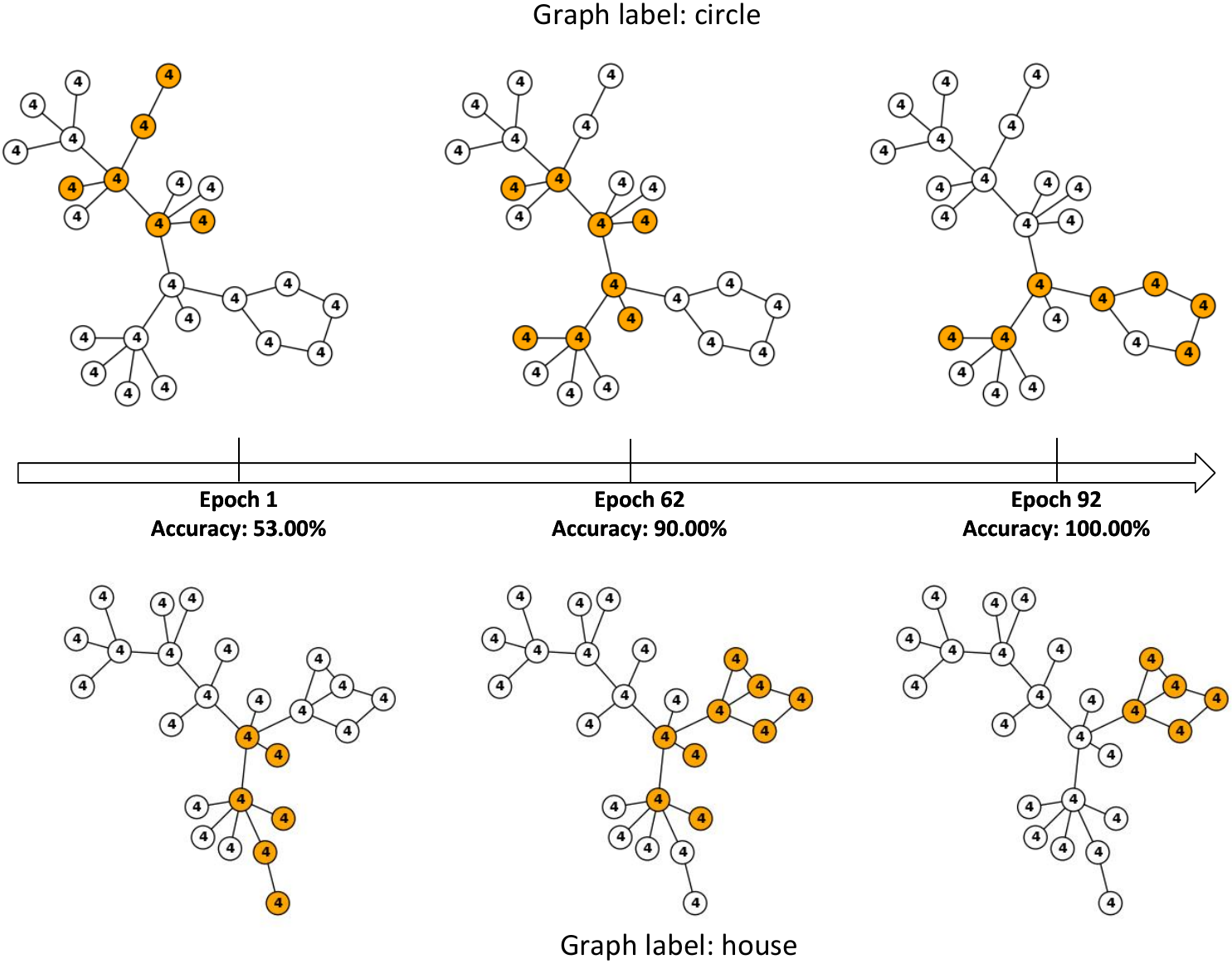}
	\includegraphics[width= 0.8\hsize]{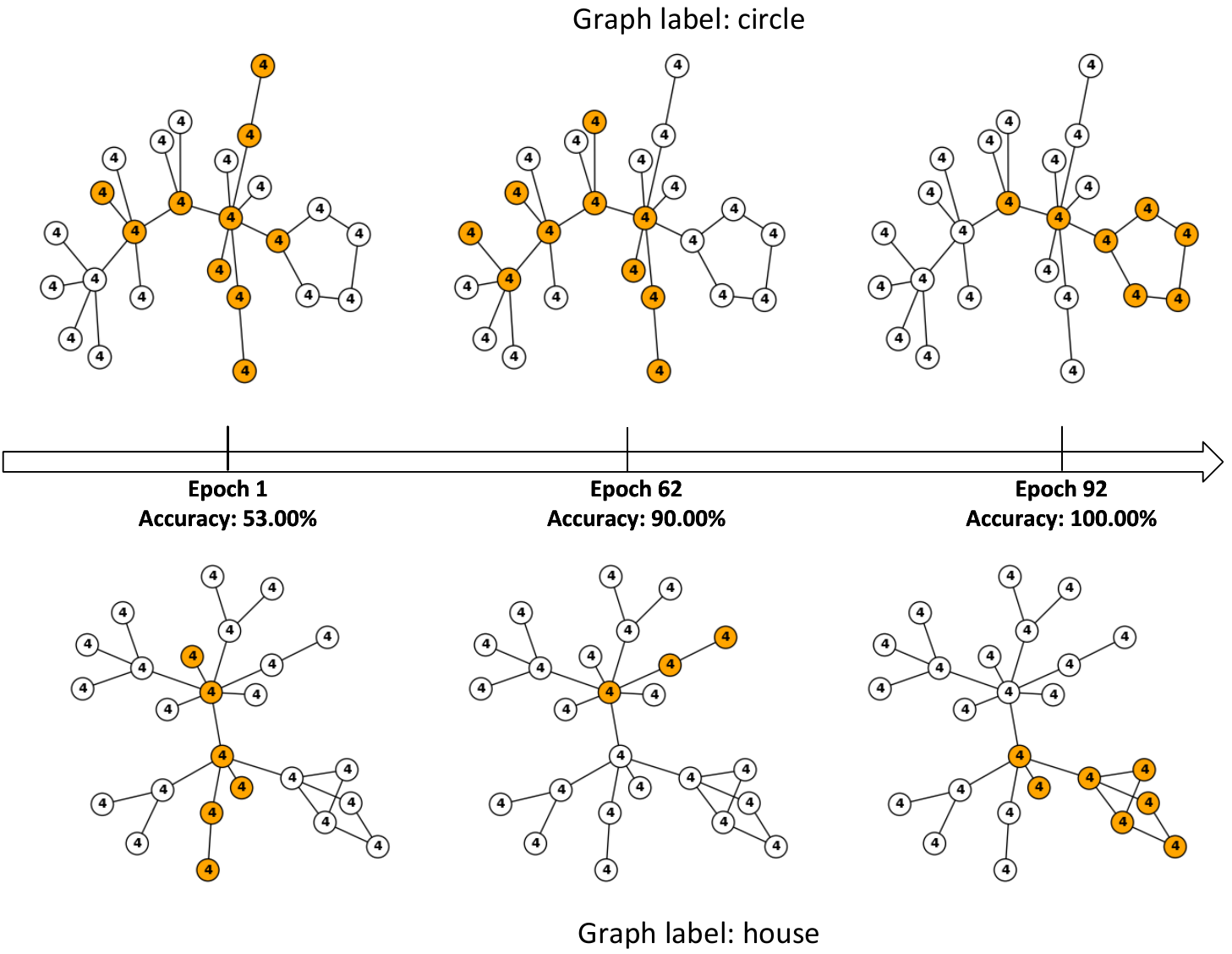}

	\caption{The extracted subgraphs at different epochs in BA-2motifs dataset. The nodes in orange color denote the extracted nodes.}
	\label{Fig.vis2}
\end{figure}

\paratitle{Ablation studies}. To explore the impact of hyperparameters in our proposed method, we perform ablation studies across various hyperparameter configurations. Specifically, we focus on two major hyperparameters: trajectory length and sample number. The trajectory length $L$ signifies the maximum length of the generation trajectory, while the sample number represents the number of substructures simultaneously extracted for each graph. Thereafter, the encoded multiple extracted substructures are aggregated using the pooling strategy denoted by $f_{\rm PL}$ in Equation (\ref{Eq.output}). We conduct experiments with our proposed walk-based MDP on NCI109 dataset, where trajctory lengths are within $\{4,8,16,32\}$, the sample numbers are within $\{3,16,32\}$. Figure \ref{Fig.vis2} shows the test accuracy curves of different hyperparameters. The curves in the right picture represents the results obtained for varying trajectory lengths, while keeping the sample number fixed at 16. Conversely, the left picture illustrates the results obtained for different sample numbers, with the trajectory length held constant at 16. We observe that increasing the sample number can produce a more stable accuracy curve during training, with sacrificing the computational cost. {Furthermore, the curves corresponding to different trajectory lengths demonstrate that our proposed method is not significantly affected by the length of trajectories. This property is particularly advantageous when we lack information about the sizes of important substructures.} 

\section{Conclusion}
In this paper, we introduce a novel self-supervised SGNN leveraging deep reinforcement walk exploration to address the inherent challenges of {inefficient sampling and lack of explainability}. Our method combines {the principle of SGNNs with the generation approach of GNN explainers to achieve the self-supervised framework}. Our proposed methods detect relevant graphlets and capture complex subgraphs through a reinforcement learning-based walk exploration process. The proposed method not only improves the explainability of the model but also significantly enhances computational efficiency. Future work will focus on exploring the integration of our RWE-SGNN with other advanced GNN architectures and extending its applicability to dynamic and heterogeneous graphs. Additionally, investigating the model performance in the context of multi-agent system will provide deeper insights and potentially lead to further enhancements in performance and generalization capabilities.

\newpage
\appendix
\section{Pseudo-codes}
This appendix contains the pseudo-codes of both the training and the inference process of our proposed model.
\begin{algorithm}[H]
\caption{Training process of sampling model and output model}
\label{Alg.train}
\begin{algorithmic}[1]
\State Initialize sampling model parameters $\Theta, \Phi, \Phi^\prime$ and output model parameters $\Theta^\prime, \Theta^{\prime\prime}$
\For{Each epoch $e$}
	\State Sample a graph data batch of size $B$ as $\{(G^{(b)}, \mat{X}^{(b)})\}_{b=1}^B$
	\State // Training sampling model
	\State Initialize the total $Q$-learning loss $l_Q \gets 0$
	\For{Each graph data $(G^{(b)}, \mat{X}^{(b)})$}
		\State Compute the low-level graphlet-aware embeddings $\mat{Z}^{(b)} \leftarrow {\rm MPNN}_{\Theta}^{(p,d,k)}(G^{(b)}, \mat{X}^{(b)})$
		\State Define the initial state $s_0^{(b)}$
		\For{Each generation iteration $t$}
			\State Generate action $a\gets \pi_\epsilon(s_t^{(b)}, q)$ with $q$ sampled from a uniform distribution $U(0,1)$
			\State Compute the $Q$-learning loss $l_Q \gets l_Q + \left|Q^p_{\Phi}(E(s, \mat{Z}), a) - Q_E(s, a) \right|$
			\State Update the next state $s_{t+1}^{(b)}$ with the generated action $a$
		\EndFor
	\EndFor
	\State Update parameter $\Theta,\Phi \gets \mathop{\rm arg~min}_{\Theta, \Phi} l_Q$
	\State Update parameter $\Phi^\prime \gets \beta\cdot\Phi + (1-\beta)\cdot\Phi^\prime$
	\State // Training output model
	\State Initialize the total output loss $l_O \gets 0$
	\For{Each graph data $(G^{(b)}, \mat{X}^{(b)})$}
		\State Compute the node embedding $\widehat{\mat{Z}}^{(b)} \gets{\rm MPNN}_{\Theta^\prime}^{(p,d,k)}(G^{(b)}, \mat{X}^{(b)})$
		\State Define the initial state $s_0^{(b)}$
		\For{Each generation iteration $t$}
			\State Generate action $a\gets \pi_\epsilon(s_t^{(b)}, 1)$
			\State Update the next state $s_{t+1}^{(b)}$ with the generated action $a$
		\EndFor
		\State Obtain the final state after $T$ iterations as $s_T^{(b)}$
		\State Compute the downstream output loss $l_O \gets l_O +  l({\rm MLP}_{\Theta^{\prime\prime}}(f_{\rm PL}(E(s_T^{(b)}, \widehat{\mat{Z}}))))$
	\EndFor
	\State Update parameter $\Theta^\prime, \Theta^{\prime\prime} \gets \mathop{\rm arg~min}_{\Theta^\prime, \Theta^{\prime\prime}}l_Q$
\EndFor
\end{algorithmic}
\end{algorithm}
\begin{algorithm}[H]
\caption{Inference process of sampling model and output model}
\label{Alg.test}
\begin{algorithmic}[1]
\State Initialize sampling model parameters $\Theta, \Phi, \Phi^\prime$ and output model parameters $\Theta^\prime, \Theta^{\prime\prime}$

	\State Compute the low-level graphlet-aware embeddings $\mat{Z} \leftarrow {\rm MPNN}_{\Theta}^{(p,d,k)}(G, \mat{X})$
	\State Define the initial state $s_0^{(b)}$
	\For{Each generation iteration $t$}
		\State Generate action $a\gets \pi_\epsilon(s_t^{(b)}, 1)$
		\State Compute the $Q$-learning loss $l_q \gets \left|Q^p_{\Phi}(E(s, \mat{Z}), a) - Q_E(s, a) \right|$
		\State $l_Q \gets l_Q + l_q$
		\State Update the next state $s_{t+1}^{(b)}$ with the generated action $a$
	\EndFor
	\State Obtain the final state after $T$ iterations as $s_T$
	\State Compute the graph embedding $\vec{o}\gets{\rm MLP}_{\Theta^{\prime\prime}}(f_{\rm PL}(E(s_T, \mat{Z}^\prime)))$
\end{algorithmic}
\end{algorithm}

\newpage
\bibliographystyle{plain}
\bibliography{nips, graph, OT, graph_datasets}
	
\end{document}